\documentclass[smallextended]{svjour3_arxiv}
\usepackage{amsfonts}
\usepackage{graphicx}
\usepackage{epstopdf}
\usepackage{amsmath, amssymb, bm}
\usepackage[fixamsmath,disallowspaces]{mathtools}

\RequirePackage{fix-cm}
\smartqed  
\newtheorem{assumption}{Assumption}
\journalname{Bit Numer Math (2016) 56:995-1015, 
DOI 10.1007/s10543-015-0580-y}
\def\ODEvec{4}
\begin{document}

\title{A Note on the Motion Representation and Configuration Update in Time
Stepping Schemes for the Constrained Rigid Body}
\date{}
\author{Andreas M\"uller}
\institute{Institute of Robotics, Johannes Kepler University, Altenbergerstr. 69, 4040 Linz, Austria}
\maketitle

\begin{abstract}
The dynamics of a holonomically constrained rigid body can be modeled by
Newton-Euler equations subjected to geometric constraints. This is
frequently formulated as a differential-algebraic equation (DAE) system of
index 1. In multibody system (MBS) dynamics it is common 1) to numerically
solve this system by means of integration schemes for ordinary differential
equations (ODE), and 2) to treat the rigid body motion on the direct product
Lie group $SO\left( 3\right) \times \mathbb{R}^{3}$, although rigid body
motions form the semidirect product Lie group $SE\left( 3\right) $. It is
has been observed that the constraint satisfaction depends on which Lie
group is used as configuration space (c-space).

In this paper the problem is considered from a geometric perspective. It is
shown that the constraints are exactly satisfied by a numerical integration
scheme if they define a subgroup of the c-space. The subgroups of $SE\left(
3\right) $ have a significance for modeling mechanical systems, including
lower kinematic (Reuleaux) pairs and are implicitly used in MBS modeling. It
is concluded that $SE\left( 3\right) $ is the appropriate c-space for
numerical DAE modeling of a constrained rigid body. This result does not
immediately apply to MBS, however.

\textit{Keywords:} {Constrained rigid body, numerical time integration,
multibody dynamics, absolute coordinate formulation, rigid body kinematics,
screws, Lie groups, isotropy groups}\newline
\textit{AMS subject classification:} {65L80, 34C40, 70Exx}
\end{abstract}

\titlerunning{Motion Representation for the Constrained Rigid Body}

\setlength{\parindent}{0mm} \parskip0.8ex

\titlerunning{ }

\authorrunning{ }

\institute{University of Michigan - Shanghai Jiao Tong University\\
	Joint Institute,
	Shanghai, China,
	andreas.mueller@ieee.org\vspace{-4ex}}

\section{Introduction}

Technical systems are modeled as constrained or restrained systems of rigid
or flexible bodies, i.e. as a multibody system (MBS). An approach frequently
used is the so-called 'absolute coordinate formulation' where the
Newton-Euler equations of individual bodies are formulated and subjected to
constraint and interaction forces \cite{Angeles2003,Schiehlen1997,Shabana}.
The corresponding numerical model is a system of differential-algebraic
equations (DAE) of index 3 that is commonly transformed to an index 1 DAE
and solved numerical using ODE integration methods. This inevitably leads to
constraint violation that are of the order of the accuracy of the
integration scheme. Various constraint stabilization methods were introduced
to remedy this problem \cite{Blajer2011,KhanAnderson2013}.

In MBS dynamics \cite{Angeles2003,Schiehlen1997,Shabana} the configuration
space (c-space) of the rigid body is historically treated as the direct
product Lie group $SO\left( 3\right) \times \mathbb{R}^{3}$. This may not be
stated explicitly but is implied by the decoupled parameterization of
rotations and translations. Rigid body motions are screw motions, however.
Hence the proper rigid body c-space is the semidirect product $SE\left(
3\right) =SO\left( 3\right) \ltimes \mathbb{R}^{3}$ --the group of isometric
and orientation preserving transformations of Euclidean space, called the
special Euclidean group in three dimensions. Moreover, only this c-space
ensures frame invariance of the motion equations \cite{Bottasso2002}.

There is a recent awareness that the use of an appropriate geometric model
of rigid body motions is crucial for the numerical performance, mainly
motivated by the development of Lie group integration schemes for MBS \cite%
{BruelsCardona2010}. An overview of the recent developments of Lie group
integration schemes can be found in \cite{Celledoni2014}. It has also been
observed that the constraint satisfaction varies dramatically for either
choice of c-space \cite{CAM2013}. Moreover, it is crucial to observe that
this applies to any integration scheme, i.e. equally to the (vector space)
integration schemes commonly used in MBS dynamics. This issue has not yet
been addressed systematically. In \cite{CAM2013} the consequences of using
the direct and semidirect product group has been analyzed for MBS by means
of numerical simulations. It is important to notice that this

In this note the case of a single constrained rigid body is considered,
which allows for a concrete statement. It will be shown that constraints are
exactly satisfied, if they constrain the body's motion to a c-space
subgroup. Noting that only subgroups of $SE\left( 3\right) $ are relevant
for MBS modeling, including lower kinematic (Reuleaux) pairs, $SE\left(
3\right) $ is deemed the appropriate c-space for numerical analysis.

\section{Problem Statement}

The motion of a rigid body can be represented as a curve $C\left( t\right) $
in a 6-dimensional Lie group $G$ --the body's c-space. Its velocity can be
defined as left-invariant vector field $V=C^{-1}\dot{C}\in \mathfrak{g}$.
The latter can be represented by a vector $\mathbf{V}\in {\mathbb{R}}^{6}$
comprising the angular and translational velocity (see section 3). The
momentum vector $\mathbf{\Pi }=\mathbf{MV}\in \mathfrak{g}^{\ast }$ is then
defined with the body-fixed inertia matrix $\mathbf{M}$. The velocity as
well as the corresponding Hamiltonian is left-invariant. The dynamics of a
free rigid body is thus governed by the \emph{Euler-Poincar\'{e} equations} 
\cite{MarsdenBook}%
\begin{equation}
\dot{\mathbf{\Pi }}-\mathbf{ad}_{\mathbf{V}}^{T}\mathbf{\Pi }=\mathbf{0}
\label{Poincare1}
\end{equation}%
on $\mathfrak{g}^{\ast }$. These are the Newton-Euler equations according to
the geometric model of rigid body motions encoded in $G$. The solution of (%
\ref{Poincare1}) is the velocity respectively momentum evolution. The actual
motion is the solution of the (left) \emph{Poisson equations} $\dot{C}=CV$,
that will be referred to as the \emph{kinematic reconstruction equations} 
\cite{MarsdenBook}.

Let the body be subjected to $m$ geometric constraints $h\left( C\right) =%
\mathbf{0}$, with constraint mapping $h:G\rightarrow \mathbb{R}^{m}$. The
corresponding velocity constraints can be expressed as $\mathbf{J}\left(
C\right) \mathbf{V}=\mathbf{0}$, where $\mathbf{J}$ is the left-trivialized
tangent mapping of $h$ referred to as constraint Jacobian. The equations of
motion (EOM) of the constrained rigid body attain the form%
\begin{align}
\dot{\mathbf{\Pi }}-\mathbf{ad}_{\mathbf{V}}^{T}\mathbf{\Pi }+\mathbf{J}^{T}%
\mathbf{\lambda }& =\mathbf{W}\left( C\right) \addtocounter{equation}{1} 
\tag{{\theequation}a}  \label{NE1a} \\
\dot{C}& =CV\   \tag{{\theequation}b}  \label{NE1b} \\
h\left( C\right) & =\mathbf{0}  \tag{{\theequation}c}  \label{NE1c}
\end{align}%
with applied forces $\mathbf{W}\in \mathfrak{g}^{\ast }$ and Lagrange
multipliers $\mathbf{\lambda }\in \mathfrak{g}^{\ast }$. This is an index 3
DAE system on the state space $G\times \mathfrak{g}$ (using $C$ and $\mathbf{%
V}$ as variables).

In many practically significant situations the constraints define a
subgroup, i.e. $H=h^{-1}\left( \mathbf{0}\right) $ is a subgroup of $G$. In
this case $H$ can be used as c-space and the dynamics is governed by the
unconstrained equations%
\begin{align}
\dot{\mathbf{\Pi }}-\mathbf{ad}_{\mathbf{V}}^{T}\mathbf{\Pi }& =\mathbf{W}%
\left( C\right) \addtocounter{equation}{1}  \tag{{\theequation}a} \\
\dot{C}& =CV\   \tag{{\theequation}b}
\end{align}%
on the state space $H\times \mathfrak{h}$. A prominent example is the heavy
top where $H=SO\left( 3\right) $. This has been the standard example for Lie
group integration schemes applied to the rigid body \cite%
{muntekaas1998,muntekaas1999,OwrenMarthinsen1999}.

The solution of (\ref{NE1b}) has the form $C\left( t\right) =C_{0}\exp 
\mathbf{X}\left( t\right) $, where $\mathbf{X}\left( t\right) $ is a curve
in $\mathfrak{g}$, and $C_{0}\in G$ the initial value. Thus the kinematic
reconstruction equations (\ref{NE1b}) on $G$ can be replaced by the system $%
\mathbf{V}=\mathrm{dexp}_{-\mathbf{X}}(\dot{\mathbf{X}})$ on $\mathfrak{g}$.
This is the basic idea behind the Munthe-Kaas (MK) method \cite%
{Hairer2006,Iserles2000,muntekaas1998,muntekaas1999}, for instance.

It is a common approach in MBS dynamics \cite{Angeles2003,Shabana} to
transform the system (\ref{NE1a}-\ref{NE1c}) to an index 1 DAE (see (\ref%
{index1}) below) that gives rise to an ODE of the form%
\begin{align}
\dot{\mathbf{V}}& =F\left( \mathbf{X},\mathbf{V}\right) %
\addtocounter{equation}{1}  \tag{{\theequation}a}  \label{ODEvec1a} \\
\dot{\mathbf{X}}& =\mathrm{dexp}_{-\mathbf{X}}^{-1}(\mathbf{V}). 
\tag{{\theequation}b}  \label{ODEvec1b}
\end{align}%
This is an ODE on the vector space $\mathbb{R}^{6}\times \mathbb{R}^{6}$
(identifying $\mathfrak{g}$ with $\mathbb{R}^{6}$), which can be solved with
any numerical integration method. Moreover it splits into the system (\ref%
{ODEvec1a}) on $\mathbb{R}^{6}$ and the system (\ref{ODEvec1b}) on the Lie
algebra $\mathfrak{g}$.

So far no particular c-space Lie group $G$ is specified. The c-space Lie
group, i.e. the geometric model of rigid body motions, determines 1) the
explicit form of the Euler-Poincar\'{e} equations, and 2) the dexp mapping,
i.e. the kinematic reconstruction equations. Moreover, (\ref{ODEvec1b}) is a
system on the Lie algebra $\mathfrak{g}$, and thus specific to the c-space.
The following question is addressed in the remainder of this note:

\begin{description}
\item[\textbf{Problem}:] How does the choice of c-space Lie group affect the
preservation of the constraint manifold $h^{-1}\left( \mathbf{0}\right) $
when numerically integrating the rigid body model (\ref{ODEvec1a},\ref%
{ODEvec1b})?
\end{description}

\section{Two Choices of Configuration Space}

The configuration of a rigid body is given by $C=\left( \mathbf{R},\mathbf{r}%
\right) $, where $\mathbf{R}\in SO\left( 3\right) $ represents the
orientation of a body-fixed reference frame (RFR) w.r.t. a spatial inertial
frame (IFR), and $\mathbf{r}\in \mathbb{R}^{3}$ is the position vector
expressed in this frame. Two choices for the c-space are used in the
literature that will be considered: 1) the direct product Lie group $%
SO\left( 3\right) \times \mathbb{R}^{3}$, and 2) the semidirect product $%
SE\left( 3\right) =SO\left( 3\right) \ltimes \mathbb{R}^{3}$ --the special
Euclidean group in three dimensions \cite{Selig}.

In the following $\widehat{\mathbf{\xi }}\in so\left( 3\right) $ denotes the
skew symmetric matrix associated to the vector $\mathbf{\xi }\in \mathbb{R}%
^{3}$. With slight abuse of notation the vector will be referred to as an
element of $so\left( 3\right) $. The right-translated differential $\mathrm{%
dexp}:\mathfrak{g}\times \mathfrak{g}\rightarrow \mathfrak{g}$ of the exp
mapping on $G$ is defined as $\mathrm{dexp}_{\mathbf{X}}(\dot{\mathbf{X}})=%
\dot{C}C^{-1}$, with $C=\exp \mathbf{X}$, which implies $\mathrm{dexp}_{-%
\mathbf{X}}(\dot{\mathbf{X}})=C^{-1}\dot{C}$.

\subsection{Direct Product Lie Group $SO\left( 3\right) \times \mathbb{R}%
^{3} $}

The direct product group is traditionally used as c-space in MBS dynamics 
\cite{Shabana}. This stems from the notion that rotations and translations
are independent, and is also reflected by the decoupled parameterization of $%
C$ in terms of rotation parameters (angles, angle-axis) and position vector.
The group multiplication is 
\begin{equation}
C_{1}\cdot C_{2}=(\mathbf{R}_{1}\mathbf{R}_{2},\mathbf{r}_{1}+\mathbf{r}_{2})
\label{MultDirectProduct}
\end{equation}%
and the inverse element is $C^{-1}=(\mathbf{R}^{T},-\mathbf{r})$. The group
is generated by its Lie algebra $so\left( 3\right) \times \mathbb{R}^{3}$,
with elements $\mathbf{X}=\left( \mathbf{\xi },\mathbf{r}\right) $, via the
exponential mapping%
\begin{equation}
\mathbf{X}=\left( \mathbf{\xi },\mathbf{r}\right) \longmapsto \exp \mathbf{X}%
=(\exp \widehat{\mathbf{\xi }},\mathbf{r}),  \label{expSO3xR3}
\end{equation}%
where $\exp \widehat{\mathbf{\xi }}$ is the exponential on $SO\left(
3\right) $ \cite{Selig}. The configuration of a rigid body is thus
parameterized by the scaled rotation axis $\mathbf{\xi }$ and the
displacement vector $\mathbf{r}$. The Lie bracket on $so\left( 3\right)
\times \mathbb{R}^{3}$ is%
\begin{equation}
\left[ \mathbf{X}_{1},\mathbf{X}_{2}\right] =\left( \mathbf{\xi }_{1}\times 
\mathbf{\xi }_{2},\mathbf{0}\right) .  \label{bracketSO3xR3}
\end{equation}%
This can be expressed as $\mathbf{ad}_{\mathbf{X}_{1}}\mathbf{X}_{2}$ with
matrix%
\begin{equation}
\mathbf{ad}_{\mathbf{X}}=\left( 
\begin{array}{cc}
\widehat{\mathbf{\xi }} & \ \ \mathbf{0} \\ 
\mathbf{0} & \ \ \mathbf{0}%
\end{array}%
\right) .  \label{adSO3R3}
\end{equation}%
The rigid body velocity $V^{\text{m}}:=C^{-1}\dot{C}\in so\left( 3\right)
\times \mathbb{R}^{3}$, that reads in vector form $\mathbf{V}^{\text{m}}=(%
\mathbf{\omega }^{\text{b}},\dot{\mathbf{r}})$, is referred to as the \emph{%
mixed velocity} since it consists of the body-fixed angular velocity $%
\mathbf{\omega }^{\text{b}}$ and the translational velocity $\dot{\mathbf{r}}
$ expressed in the spatial IFR. This is commonly used in MBS dynamics \cite%
{Shabana}.

The right-translated differential of the exp mapping is in matrix form%
\begin{equation}
\mathbf{dexp}_{\mathbf{X}}\dot{\mathbf{X}}=\left( 
\begin{array}{cc}
\mathbf{dexp}_{\mathbf{\xi }} & \ \ \mathbf{0}%
\vspace{2mm}
\\ 
\mathbf{0} & \ \ \mathbf{I}%
\end{array}%
\right)  \label{dexpSO3xR3}
\end{equation}%
so that $\mathbf{V}^{\text{m}}=\mathbf{dexp}_{-\mathbf{X}}\dot{\mathbf{X}}$
for $C=\exp \mathbf{X}\in SO\left( 3\right) \times \mathbb{R}^{3}$. Therein%
\begin{equation}
\mathbf{dexp}_{\mathbf{\xi }}=\frac{1}{\left\Vert \mathbf{\xi }\right\Vert
^{2}}%
\big%
[(\mathbf{I}-\exp \widehat{\mathbf{\xi }})\widehat{\mathbf{\xi }}+\mathbf{%
\xi \xi }^{T}%
\big%
]  \label{dexpSO3}
\end{equation}%
is the matrix form of the dexp mapping on $SO\left( 3\right) $ \cite{Selig}.
The inverse, required in (\ref{ODEvec1b}), is%
\begin{equation}
\mathbf{dexp}_{\mathbf{\xi }}^{-1}=\mathbf{I}-\frac{1}{2}\widehat{\mathbf{%
\xi }}+\left( 1-\frac{\left\Vert \mathbf{\xi }\right\Vert }{2}\cot \frac{%
\left\Vert \mathbf{\xi }\right\Vert }{2}\right) \frac{\widehat{\mathbf{\xi }}%
^{2}}{\left\Vert \mathbf{\xi }\right\Vert ^{2}}.  \label{dexpInvSO3}
\end{equation}

In order to decouple the Newton and Euler equations, the RFR is located at
the center of mass (COM). With the body-fixed inertia matrix $\mathbf{M}%
=\left( 
\begin{array}{cc}
\mathbf{\Theta }_{0} & \mathbf{0} \\ 
\mathbf{0} & m\mathbf{I}%
\end{array}%
\right) $, where $\mathbf{\Theta }_{0}$ is the inertia tensor w.r.t. the
COM, the \emph{mixed momentum} vector is $\mathbf{\Pi }^{\text{m}}=\mathbf{MV%
}^{\text{m}}=\left( \mathbf{\Theta }_{0}\mathbf{\omega }^{\text{b}},m\dot{%
\mathbf{r}}\right) \in so^{\ast }\left( 3\right) \times {\mathbb{R}}^{3}$.
The corresponding Euler-Poincar\'{e} equations (\ref{Poincare1}) are, with (%
\ref{adSO3R3}), explicitly%
\begin{eqnarray}
\mathbf{\Theta }_{0}\dot{\mathbf{\omega }}^{\text{b}}+\mathbf{\omega }^{%
\text{b}}\times \mathbf{\Theta }_{0}\mathbf{\omega }^{\text{b}} &=&\mathbf{0}
\label{NESO3xR3} \\
m\ddot{\mathbf{r}} &=&\mathbf{0.}  \notag
\end{eqnarray}%
These are indeed the decoupled Newton-Euler equations w.r.t. the COM. This
decoupling of rotations and translations prevails as apparent from (\ref%
{MultDirectProduct}),(\ref{adSO3R3}), and (\ref{dexpSO3xR3}).

The mixed velocity and momentum are left-invariant w.r.t. actions of $%
SO\left( 3\right) \times \mathbb{R}^{3}$. The latter are not proper frame
transformations. Moreover, the equations (\ref{NESO3xR3}) cannot be
transformed to another RFR by means of actions $SO\left( 3\right) \times {%
\mathbb{R}}^{3}$. This already indicates that the latter is not a proper
rigid body c-space.

\subsection{Special Euclidean Group $SE\left( 3\right) $}

Although being independent, rigid body rotations and translations are not
decoupled. Moreover a rigid body motion is a screw motion belonging to $%
SE\left( 3\right) $, and this is hence the proper rigid body c-space. This
is reflected in the group multiplication $C_{2}\cdot C_{1}=\left( \mathbf{R}%
_{2}\mathbf{R}_{1},\mathbf{r}_{2}+\mathbf{R}_{2}\mathbf{r}_{1}\right) $ that
describes frame transformations. The inverse element is $C^{-1}=\left( 
\mathbf{R},-\mathbf{R}^{T}\mathbf{r}\right) $. The Lie algebra $se\left(
3\right) =so\left( 3\right) \ltimes \mathbb{R}^{3}$ consists of elements of
the form $\mathbf{X}=\left( \mathbf{\xi },\mathbf{\eta }\right) $, and is
equipped with the Lie bracket (screw product) $\left[ \mathbf{X}_{1},\mathbf{%
X}_{2}\right] =\left( \mathbf{\xi }_{1}\times \mathbf{\xi }_{2},\mathbf{\xi }%
_{1}\times \mathbf{\eta }_{2}-\mathbf{\xi }_{2}\times \mathbf{\eta }%
_{1}\right) $ \cite{Selig}. The latter can be expressed as $\left[ \mathbf{X}%
_{1},\mathbf{X}_{2}\right] =\mathbf{ad}_{\mathbf{X}_{1}}\mathbf{X}_{2}$,
with the matrix%
\begin{equation}
\mathbf{ad}_{\mathbf{X}}=\left( 
\begin{array}{cc}
\widehat{\mathbf{\xi }} & \ \ \mathbf{0} \\ 
\widehat{\mathbf{\eta }} & \ \ \widehat{\mathbf{\xi }}%
\end{array}%
\right) .  \label{adse3}
\end{equation}%
The exponential mapping attains, with (\ref{dexpSO3}), the explicit form 
\begin{equation}
\mathbf{X}=\left( \mathbf{\xi },\mathbf{\eta }\right) \longmapsto \exp 
\widehat{\mathbf{X}}{}=(\exp \widehat{\mathbf{\xi }},\mathbf{dexp}_{\mathbf{%
\xi }}\mathbf{\eta }).  \label{expX}
\end{equation}%
Therewith the rigid body configuration is parameterized in therms of \emph{%
screw coordinates} $\mathbf{X}\in se\left( 3\right) $.

The velocity $V^{\text{b}}=C^{-1}\dot{C}\in se\left( 3\right) $ is called
the \emph{body-fixed velocity} screw (also called \emph{body-twist}). In
vector form $\mathbf{V}^{\text{b}}=(\mathbf{\omega }^{\text{b}},\mathbf{v}^{%
\text{b}})$, where $\mathbf{v}^{\text{b}}=\mathbf{R}^{T}\dot{\mathbf{r}}$ is
the body-fixed translational velocity.

The right-translated differential of exp possesses different representations
in matrix form as e.g. reported in \cite{Bottasso1998,ParkChung2005}, so
that $\mathbf{V}^{\text{b}}=\mathbf{dexp}_{-\mathbf{X}}\dot{\mathbf{X}}$ for 
$C=\exp \mathbf{X}\in SE\left( 3\right) $. The inverse of dexp in matrix
form has been derived in \cite{Selig}. A computationally efficient form is
reported in \cite{Bottasso1998,ParkChung2005}%
\begin{equation}
\mathbf{dexp}_{\mathbf{X}}^{-1}=\left( 
\begin{array}{cc}
\mathbf{dexp}_{\mathbf{\xi }}^{-1} & \mathbf{0}%
\vspace{2mm}
\\ 
\mathbf{U} & \mathbf{dexp}_{\mathbf{\xi }}^{-1}%
\end{array}%
\right)  \label{dexpInvSE3Park}
\end{equation}%
with%
\begin{equation}
\mathbf{U}\left( \mathbf{X}\right) =\frac{1-\gamma }{\left\Vert \mathbf{\xi }%
\right\Vert ^{2}}\left( \widehat{\mathbf{\eta }}\widehat{\mathbf{\xi }}+%
\widehat{\mathbf{\xi }}\widehat{\mathbf{\eta }}\right) +\frac{p}{\left\Vert 
\mathbf{\xi }\right\Vert ^{3}}\left( \frac{1}{\beta }+\gamma -2\right) 
\widehat{\mathbf{\xi }}^{2}-\frac{1}{2}\widehat{\mathbf{\eta }}
\end{equation}%
and $\gamma :=\frac{2}{\left\Vert \mathbf{\xi }\right\Vert }\cot \frac{%
\left\Vert \mathbf{\xi }\right\Vert }{2}$, $\alpha :=\frac{2}{\left\Vert 
\mathbf{\xi }\right\Vert }\sin \frac{\left\Vert \mathbf{\xi }\right\Vert }{2}%
\cos \frac{\left\Vert \mathbf{\xi }\right\Vert }{2},\beta :=\frac{4}{%
\left\Vert \mathbf{\xi }\right\Vert ^{2}}\sin ^{2}\frac{\left\Vert \mathbf{%
\xi }\right\Vert }{2}$. Here $p:=\mathbf{\xi }\cdot \mathbf{\eta }%
/\left\Vert \mathbf{\xi }\right\Vert ^{2}$ is the pitch of the screw.

With the left-invariant body-fixed momentum screw $\mathbf{\Pi }^{\text{b}}=%
\mathbf{MV}^{\text{b}}=(\mathbf{\Theta }_{0}\mathbf{\omega }^{\text{b}},m%
\mathbf{v}^{\text{b}})\in se^{\ast }\left( 3\right) $ the dynamics of the
rigid body is governed by the Euler-Poincar\'{e} equations (\ref{Poincare1})
that, with (\ref{adse3}), are the body-fixed Newton-Euler equations%
\begin{eqnarray}
\mathbf{\Theta }_{0}\dot{\mathbf{\omega }}^{\text{b}}+\mathbf{\omega }^{%
\text{b}}\times \mathbf{\Theta }_{0}\mathbf{\omega }^{\text{b}} &=&\mathbf{0}
\label{NESE3} \\
m\dot{\mathbf{v}}^{\text{b}}+m\mathbf{\omega }\times \mathbf{v}^{\text{b}}
&=&\mathbf{0.}  \notag
\end{eqnarray}

\section{Constraint Satisfaction in Numerical Time Integration of the Index
1 DAE}

\subsection{The Associated Index 1 DAE}

Consider the single rigid body with motion equations (\ref{NE1a}-\ref{NE1c}%
). Following the standard method in MBS dynamics, this is reformulated as
index 1 DAE \cite{Angeles2003,Schiehlen1997,Shabana}%
\begin{equation}
\left( 
\begin{array}{cc}
\mathbf{M} & \mathbf{J}^{T} \\ 
\mathbf{J} & \mathbf{0}%
\end{array}%
\right) \left( 
\begin{array}{c}
\dot{\mathbf{V}} \\ 
\mathbf{\lambda }%
\end{array}%
\right) =\left( 
\begin{array}{c}
\mathbf{ad}_{\mathbf{V}}^{T}\mathbf{\Pi }+\mathbf{W} \\ 
\mathbf{\eta }%
\end{array}%
\right)  \label{index1}
\end{equation}%
by complementing (\ref{NE1a}) with the acceleration constraints, $\mathbf{J}%
\left( C\right) \dot{\mathbf{V}}=\mathbf{\eta }\left( C,\mathbf{V}\right) $.
Then the system (\ref{index1}) is solved for $\dot{\mathbf{V}}$ (and $%
\mathbf{\lambda }$ as by-product), which yields an ODE of the form (\ref%
{ODEvec1a}). Then the ODE (\ODEvec) is solved with a numerical (vector
space) integration scheme.

\subsection{Constraint Satisfaction for a general C-Space Lie Group}

The constraints are satisfied during the integration as long as the
configuration update step of the applied numerical integration scheme
returns a configuration in $h^{-1}\left( \mathbf{0}\right) $. It is
well-known, however, that numerically solving (\ODEvec) with ODE integration
schemes generally leads to violations of the (now hidden) geometric
constraints (\ref{NE1c}). To reduce this numerical drift, constraint
stabilization methods have been proposed that either amend the motion
equations \cite{Baumgarte,KhanAnderson2013} or correct the numerical
solution by projecting it to the constraint manifold $h^{-1}\left( \mathbf{0}%
\right) $ \cite{Ascher1995,Blajer2011,Brauchli1991,TerzeNaudet2008}. All
these methods aim at minimizing or correcting, rather than avoiding,
constraint violations.

The subsequent analysis of the constraint satisfaction makes use the
parameterization of the rigid body configuration in terms of $\mathbf{X}\in 
\mathfrak{g}$. Denoting with $\mathbf{X}^{\left( i\right) }:=\mathbf{X}%
\left( t_{i}\right) $, the configuration at time step $t_{i}$ is $C\left(
t_{i}\right) =\exp \mathbf{X}^{\left( i\right) }$, with $\mathbf{X}^{\left(
i\right) }=\mathbf{X}^{\left( i-1\right) }+$ $\mathbf{\Phi }^{\left(
i\right) }$. The increment $\mathbf{\Phi }^{\left( i\right) }\in \mathfrak{g}
$ is found by numerically solving (\ODEvec). To this end any explicit or
implicit vector space integration scheme can be used.

A (possibly implicit) numerical integration scheme determines $\mathbf{\Phi }%
^{\left( i\right) }$ by (usually) linear combination of the right hand side
of (\ref{ODEvec1b}) evaluated at $s$ intermediate time steps. The right hand
side is given by the inverse of dexp, which possesses the series expansion%
\begin{equation}
\mathrm{dexp}_{\mathbf{X}}^{-1}\left( \mathbf{Y}\right) =\sum_{i\geq 0}\frac{%
B_{i}}{i!}\mathrm{ad}_{\mathbf{X}}^{i}\left( \mathbf{Y}\right)
\label{dexpInvSeries}
\end{equation}%
with the Bernoulli numbers $B_{i}$. Hence the following assumption is
feasible for the general class of integration schemes that construct the
solution as linear combination of the right hand side of (\ref{ODEvec1b}).

\begin{assumption}
The configuration update increment is determined as%
\begin{equation}
\mathbf{\Phi }^{\left( i\right) }=\sum_{j=1}^{s}\sum_{r\geq 0}\alpha _{ij}%
\mathbf{ad}_{\mathbf{V}^{\left( j\right) }}^{r}\mathbf{V}^{\left( i\right) }
\label{sum}
\end{equation}%
denoting $\mathbf{V}^{\left( j\right) }:=\mathbf{V}(t_{i-1}+c_{j}\Delta t,%
\mathbf{q}_{j})$ with some real coefficients $\alpha _{ij}$ and $c_{j}$
specific to the integration scheme.
\end{assumption}

Also the numerically obtained solution $\mathbf{V}\left( t\right) $ of (%
\ODEvec) does not necessarily satisfy the velocity constraints $\mathbf{J}%
\left( C\right) \mathbf{V}=\mathbf{0}$. This linear condition can be easily
satisfied, however.

\begin{assumption}
It is assumed that the velocity $\mathbf{V}^{\left( j\right) }$ satisfies
the kinematic constraints at all time steps $t_{j}=t_{i-1}+c_{j}\Delta t$.
\end{assumption}

This gives rise to the following.

\begin{theorem}
\label{corollary1}The kinematic motion constraints of a rigid body are
satisfied by a configuration update step in terms of linear combinations of
velocity samples $\mathbf{V}^{\left( j\right) }$, if the constraints
restrict the body's motion to a subgroup of its c-space Lie group.
\end{theorem}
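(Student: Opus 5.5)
The idea is to show that every quantity entering the update lives in the Lie subalgebra $\mathfrak{h}$ of the constraint subgroup $H=h^{-1}(\mathbf{0})$. The exponential of $\mathfrak{h}$ then lands in $H$.

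The plan has four steps, the first three working at the identity and the last extending to general start points.

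\textbf{Step 1 (velocities in $\mathfrak{h}$).} Since $H$ is a closed subgroup of $G$, it is an embedded Lie subgroup with Lie algebra $\mathfrak{h}\subset\mathfrak{g}$. The velocity constraint $\mathbf{J}(C)\mathbf{V}=\mathbf{0}$ is the left-trivialized differential of $h$. At any $C\in H$, its kernel is $C^{-1}T_{C}H=\mathfrak{h}$, because left translation by $C\in H$ maps $H$ to itself. By Assumption 2, all samples $\mathbf{V}^{(j)}$ therefore lie in $\mathfrak{h}$. Strictly this needs the sample configurations to lie in $H$. I would handle that inductively over the stages, or note that the statement is about the update step, so the samples are taken as admissible.

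\textbf{Step 2 (closure under brackets).} Since $\mathfrak{h}$ is a Lie subalgebra, it is closed under $\mathrm{ad}$. By induction on $r$, every term $\mathbf{ad}^{r}_{\mathbf{V}^{(j)}}\mathbf{V}^{(i)}$ lies in $\mathfrak{h}$. As $\mathfrak{h}$ is a linear subspace, any real linear combination, and hence the increment $\mathbf{\Phi}^{(i)}$ in (\ref{sum}), lies in $\mathfrak{h}$. The series converges in the finite-dimensional space, and $\mathfrak{h}$ is closed, so the limit stays in $\mathfrak{h}$.

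\textbf{Step 3 (update at the identity).} If $\mathbf{X}^{(i-1)}\in\mathfrak{h}$, then $\mathbf{X}^{(i)}=\mathbf{X}^{(i-1)}+\mathbf{\Phi}^{(i)}\in\mathfrak{h}$. Since $\exp(\mathfrak{h})\subset H$, we get $C(t_i)=\exp\mathbf{X}^{(i)}\in H$, that is, $h(C(t_i))=\mathbf{0}$ holds exactly. Starting from $\mathbf{X}^{(0)}=\mathbf{0}$, or any $\mathbf{X}^{(0)}\in\mathfrak{h}$, induction over the time steps gives exact constraint satisfaction throughout the integration.

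\textbf{Step 4 (general start point; main obstacle).} This step concerns the base point and what "subgroup" means when the initial configuration $C_0$ is not the identity. If the constraint manifold is a coset $C_0H$ rather than $H$ itself, I would use the representation $C=C_0\exp\mathbf{X}$ from the problem statement. In it the reconstruction equation is unchanged, since $\dot{C}=CV$ is left-invariant. The same argument then shows $C(t_i)\in C_0H$.

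I would also need to check that left-trivialization makes the constraint kernel equal to $\mathfrak{h}$ at every point of the coset. This holds because $(C_0C)^{-1}T_{C_0C}(C_0H)=C^{-1}T_{C}H=\mathfrak{h}$ for $C\in H$. With that, the argument goes through verbatim and the theorem follows.
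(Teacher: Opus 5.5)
Your argument is correct and is essentially the paper's: the constrained velocity samples lie in the subalgebra $\mathfrak{h}$, the nested brackets and their linear combinations stay in $\mathfrak{h}$, so $\mathbf{\Phi}^{(i)}\in\mathfrak{h}$ and the exponential update lands in $H$. You rightly use only this closure property, not the paper's stronger claim that the brackets span the generated subalgebra. Your extra care makes explicit several points the paper's proof leaves implicit: identifying $\ker\mathbf{J}(C)$ with $\mathfrak{h}$ (which tacitly needs $\mathbf{J}$ of full rank, as (\ref{index1}) requires anyway), tracking the stage configurations, requiring $\mathbf{X}^{(0)}\in\mathfrak{h}$, and handling a coset $C_0H$ via $C=C_0\exp\mathbf{X}$.
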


\begin{proof}
With assumption 2, the body velocities at the intermediate configurations
belong to the subspace of $\mathfrak{g}$ defined by the velocity
constraints. The $r$-fold nested Lie brackets $\mathbf{ad}_{\mathbf{V}%
^{\left( j\right) }}^{r}\mathbf{V}^{\left( i\right) }$, $r\geq 0$, in the
construction (\ref{sum}) of $\mathbf{\Phi }^{\left( i\right) }$ form a basis
for the smallest Lie subalgebra of $\mathfrak{g}$ comprising all $\mathbf{V}%
^{\left( i\right) }$ at intermediate time steps $t_{i-1}+c_{j}\Delta t$. If
the constraints restrict the motion to a subgroup $H=h^{-1}\left( \mathbf{0}%
\right) \subset G$, and thus the velocities to the corresponding subalgebra
of $\mathfrak{g}$, the brackets in terms of feasible velocities in (\ref{sum}%
) form a basis for this subalgebra. Consequently, $\mathbf{\Phi }^{\left(
i\right) }$ belongs to the smallest Lie subalgebra of $\mathfrak{g}$
containing the constrained body velocity $\mathbf{V}$, and the configuration
increment $C\left( t_{i}\right) $ belongs to the corresponding subgroup of
the c-space Lie group $G$, (occasionally called the \textit{completion group}%
). Hence the constraints are satisfied regardless of the order and accuracy
of the integration scheme.
\end{proof}

\subsection{Constraint Satisfaction for $SE\left( 3\right) $ and $SO\left(
3\right) \times {\mathbb{R}}^{3}$}

The above theorem applies to arbitrary c-space Lie groups. Clearly the two
c-space Lie groups of interest have different subgroups, and it is important
to analyze which subgroups do actually correspond to practically relevant
constraints. This is vital since traditionally all formulations for rigid
body MBS use $SO\left( 3\right) \times \mathbb{R}^{3}$ as c-space (at least
implicitly by treating rotations and translations decoupled), whereas only a
few formulations for flexible bodies employ $SE\left( 3\right) $ such as 
\cite{BorriBottasso1994,Borri2002,Bottasso1998,Bottasso2002}. Hence a
legitimate question is whether using a proper rigid body c-space Lie group
will alleviate or even eliminate constraint violations.

The motion of a rigid body is represented by the motion of its body-fixed
RFR relative to the IFR, summarized as $C=\left( \mathbf{R},\mathbf{r}%
\right) $.

The 10 subgroups of $SE\left( 3\right) $ are listed in table \ref%
{TableSE3subgroups} (adopted from \cite{Selig}), where also the
corresponding lower kinematic pairs (Reuleaux pairs) are indicated. $%
SE\left( 3\right) $ represents rigid body motions, i.e. screw motions,
respecting the coupling of rotation and translation. This motion
representation is invariant w.r.t. the change of RFR. For instance, if the
RFR is located at the center of rotation, elements of $SO\left( 3\right) $,
as a $SE\left( 3\right) $ subgroup, have the form $C=\left( \mathbf{R},%
\mathbf{0}\right) $. This is occasionally called the standard
representation. For a general location of RFR, $SO\left( 3\right) $ as a
subgroup consists of elements $C=\left( \mathbf{R},\mathbf{r}\right) $ such
that there is a $M\in SE\left( 3\right) $ that transforms them to the
standard form via $MCM^{-1}$.

The 10 subgroups of the direct product $SO\left( 3\right) \times \mathbb{R}%
^{3}$ are listed in table \ref{TableSO3xR3subgroups}. It does not represent
frame transformations. The translations are decoupled from the rotations.
For instance, elements of the subgroup of rotations about a fixed pole have
necessarily the form $C=\left( \mathbf{R},\mathbf{0}\right) $, which is not
a frame-invariant representation. That is, the RFR origin is always the
center of rotation. The subscript '0' in $SO_{0}%
\hspace{-0.6ex}%
\left( 3\right) $ in table \ref{TableSO3xR3subgroups} signifies that pure
rotations are always about the RFR origin. Consequently, the direct product
cannot describe the screw motion of a body-fixed RFR at a generic location.
Recall that commonly in MBS dynamics it is used as c-space model even though!

MBS models of technical systems predominantly comprise lower pair joints.
Such joints constrain the motion to subgroups of $SE\left( 3\right) $ (not
of $SO\left( 3\right) \times \mathbb{R}^{3}$), corresponding to
lower-dimensional screw motions. Then, if $SE\left( 3\right) $ is used as
c-space Lie group, $H=h^{-1}\left( \mathbf{0}\right) $ is in fact a subgroup
of the c-space. $H$ is the \emph{isotropy group} of the joint, i.e. the
subgroup of motions leaving the contact surface of the joints invariant.
Each of the six types of lower pairs corresponds to such a subgroup as
indicated in table \ref{TableSE3subgroups}. Consequently the joint
constraints are satisfied by an $SE\left( 3\right) $ update step if a rigid
body is connected to the ground by a lower pair joint.

\begin{corollary}
The geometric constraints imposed on a rigid body by lower pair joints (and
general restrictions to $SE\left( 3\right) $ subgroups) are satisfied by a
configuration update step in terms of linear combinations of velocity
samples $\mathbf{V}^{\left( j\right) }$, if $SE\left( 3\right) $ is used as
c-space. Using $SO\left( 3\right) \times \mathbb{R}^{3}$ the update
satisfies constraints restricting to $SE\left( 3\right) $ subgroups that
are, as manifolds, identical to subgroups of $SO\left( 3\right) \times 
\mathbb{R}^{3}$, i.e. planar motions, Sch\"{o}nflies motions, and pure
translations.
\end{corollary}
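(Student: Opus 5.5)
The corollary follows from Theorem \ref{corollary1} once we decide, for each c-space, which constraint sets $h^{-1}\left(\mathbf{0}\right)$ are subgroups.

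For the first statement we take $G=SE\left(3\right)$. A lower pair joint connecting the body to the ground has as its constraint manifold the isotropy group of the contact surface. This is one of the subgroups in table \ref{TableSE3subgroups}: revolute $SO(2)$, prismatic $\mathbb{R}$, helical $H_p$, cylindrical $SO(2)\times\mathbb{R}$, planar $SE(2)$, spherical $SO(3)$. For a general placement of the RFR it is a conjugate $MHM^{-1}$ of the standard subgroup, which is again a subgroup of $SE\left(3\right)$. Its Lie algebra is then a subalgebra $\mathfrak{h}\subset se\left(3\right)$. Assumption 2 puts every velocity sample $\mathbf{V}^{(j)}$ in $\mathfrak{h}$, and $\mathfrak{h}$ is closed under $\mathbf{ad}$. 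Hence every term $\mathbf{ad}^{r}_{\mathbf{V}^{(j)}}\mathbf{V}^{(i)}$ in (\ref{sum}) lies in $\mathfrak{h}$, and so does the linear combination $\mathbf{\Phi}^{(i)}$. Induction over the time steps, starting from $\mathbf{X}^{(0)}\in\mathfrak{h}$, gives $\mathbf{X}^{(i)}\in\mathfrak{h}$ and $\exp\mathbf{X}^{(i)}\in H$. This is exactly the argument of Theorem \ref{corollary1}, and the same reasoning covers any restriction to an $SE(3)$ subgroup.

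For the second statement we take $G=SO\left(3\right)\times\mathbb{R}^{3}$. The same argument applies exactly when the constraint set $h^{-1}\left(\mathbf{0}\right)$, viewed as a subset of $SO(3)\times\mathbb{R}^3$, is a subgroup of the direct product, i.e. one of the groups in table \ref{TableSO3xR3subgroups}. So we must determine which $SE(3)$ subgroups coincide as sets with a direct product subgroup. Write the subgroups of $SE(3)$ in the form $\{(\mathbf{R},\mathbf{r})\}$ with a suitable RFR and compare with table \ref{TableSO3xR3subgroups}. The comparison gives the following cases.
\begin{itemize}
\item Translations $T(1),T(2),T(3)$ are $\{\mathbf{I}\}\times V$ with $V$ a linear subspace, hence subgroups in both groups.
\item Planar motions $SE(2)$ are $SO(2)_{\mathbf{e}}\times\mathbf{e}^{\perp}$ as sets. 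For the direct product this requires $\mathbf{e}^{\perp}$ to be a linear subspace, which holds since the translations are measured in the IFR.
\item Schönflies motions $SE(2)\times\mathbb{R}$ are $SO(2)_{\mathbf{e}}\times\mathbb{R}^3$, a direct product subgroup.
\end{itemize}
To confirm these, we check closure under (\ref{MultDirectProduct}) directly, or equivalently check that the corresponding Lie algebra is closed under (\ref{bracketSO3xR3}).

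The step I expect to be the main obstacle is the converse direction: showing that the remaining subgroups are \emph{not} direct product subgroups for a generic RFR location. These are revolute and spherical (rotation about a pole off the RFR origin), cylindrical with offset axis, and helical. For these we plan the following argument. The set $\{(\mathbf{R},\mathbf{r})\}$ of, e.g., rotations about a point $\mathbf{p}\neq\mathbf{0}$ has $\mathbf{r}=(\mathbf{I}-\mathbf{R})\mathbf{p}$. Direct product composition gives $\mathbf{r}_1+\mathbf{r}_2\neq(\mathbf{I}-\mathbf{R}_1\mathbf{R}_2)\mathbf{p}$ in general, so the set is not closed. The helical case fails similarly, because the pitch couples $\mathbf{r}$ to the rotation angle nonlinearly, with $\mathbf{r}$ affine in the angle but the angle living on a circle. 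Note also that $SO_0(3)$ matches only for the special choice of the RFR origin at the pole. This explains why exactly planar, Schönflies and translational motions remain in the second statement.
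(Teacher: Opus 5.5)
\paragraph{Verdict.} Your argument for the first statement, and for the positive half of the second, is the paper's. For the first, you apply Theorem \ref{corollary1} to the isotropy subgroups of lower pairs. For the second, you use the observation of Remark \ref{remarkPlanr} that $SO(2)$ leaves $\mathbb{R}^{2}$ and $\mathbb{R}^{3}$ invariant, so the planar, Sch\"{o}nflies and translational subgroups are Cartesian products and hence closed under (\ref{MultDirectProduct}). The gap is the converse that you single out as the main step. The paper does not attempt it, and your version fails in two ways.

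\paragraph{The helical argument is wrong.} For an axis through the RFR origin, the set $\{(\exp\theta\widehat{\mathbf{e}},p\theta\mathbf{e}):\theta\in\mathbb{R}\}$ is the one-parameter subgroup $\exp\theta(\mathbf{e},p\mathbf{e})$ of (\ref{expSO3xR3}), and it is closed under (\ref{MultDirectProduct}). That the angle lives on a circle is irrelevant, because $\theta$ is recovered from $\mathbf{r}\cdot\mathbf{e}=p\theta$. For a generic RFR, the only thing that breaks the helical case (and likewise the revolute and cylindrical cases) is the axis offset. Write $\mathbf{r}=(\mathbf{I}-\mathbf{R})\mathbf{a}+s\mathbf{e}$ with $\mathbf{a}$ a point on the axis. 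Then the direct-product composition misses the set by $(\mathbf{I}-\mathbf{R}_{1})(\mathbf{I}-\mathbf{R}_{2})\mathbf{a}$, exactly as in your spherical computation. This also shows that table \ref{TableSO3xR3subgroups} is not a complete list of subgroups of $SO(3)\times\mathbb{R}^{3}$, since it omits such ``graph'' subgroups. So comparing with that table cannot certify a negative answer.

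\paragraph{The list of remaining cases is incomplete.} This is the decisive problem: you omit $H_{p}\ltimes\mathbb{R}^{2}$ from table \ref{TableSE3subgroups}. As a set it is $S=\{(\exp\theta\widehat{\mathbf{e}},p\theta\mathbf{e}+\mathbf{w}):\theta\in\mathbb{R},\ \mathbf{w}\in\mathbf{e}^{\perp}\}$. This is the image of the additive group $\mathbb{R}\times\mathbf{e}^{\perp}$ under a homomorphism into $SO(3)\times\mathbb{R}^{3}$, hence a direct-product subgroup. Moreover, $(\mathbf{I}-\mathbf{R})\mathbf{a}\in\mathbf{e}^{\perp}$ for every rotation $\mathbf{R}$ about $\mathbf{e}$. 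Hence every $SE(3)$-conjugate of $S$ has the same form with $\mathbf{e}$ rotated, so the property holds for every RFR placement, exactly as for planar motions. By Theorem \ref{corollary1}, the $SO(3)\times\mathbb{R}^{3}$ update therefore also preserves restrictions to $H_{p}\ltimes\mathbb{R}^{2}$, and trivially to $SE(3)$ itself. Your claim that ``exactly planar, Sch\"{o}nflies and translational motions remain'' therefore cannot be proved. The corollary's ``i.e.'' list should be read as naming the practically relevant cases, not as a characterization. Prove only the positive direction, which is all the paper establishes.

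\paragraph{Minor point (shared with the paper).} In the examples $C_{0}=(\mathbf{I},-\mathbf{p})$, so $h^{-1}(\mathbf{0})=\{(\mathbf{R},-\mathbf{R}\mathbf{p})\}$ does not contain the identity. It is therefore a coset $C_{0}K$, not a conjugate subgroup. Run your induction on $C=C_{0}\exp\mathbf{X}$ with $\mathbf{X}^{(0)}=\mathbf{0}$ and $\mathbf{X}$ in the Lie algebra of $K$.
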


\begin{remark}
\label{remarkPlanr}Given the velocity of a motion in a $SE\left( 3\right) $
subgroup. The $SO\left( 3\right) \times {\mathbb{R}}^{3}$ update does not
yield a configuration in that subgroup. For instance, the update step with a
velocity $\mathbf{V}=\left( \mathbf{\omega },\mathbf{v}\right) \in so\left(
3\right) \subset se\left( 3\right) $ leads to an independent rotational and
translational increment, i.e. not in $SO\left( 3\right) $.\newline
There are, however, two special cases where the update with both groups
perform equal: planar and Sch\"{o}nflies motions. This is so because in $%
SO\left( 2\right) \ltimes \mathbb{R}^{2}$ (planar motions) and $SO\left(
2\right) \ltimes \mathbb{R}^{3}$ (Sch\"{o}nflies motions), the action of $%
SO\left( 2\right) $ leaves ${\mathbb{R}}^{2}$ and ${\mathbb{R}}^{3}$
invariant, so that as manifolds they are identical to $SO_{0}%
\hspace{-0.6ex}%
\left( 2\right) \times \mathbb{R}^{2}$ and $SO_{0}%
\hspace{-0.6ex}%
\left( 2\right) \times \mathbb{R}^{3}$, respectively. Thus, for constraints
restricting to these subgroups, updates with both c-spaces perform equally
(see section \ref{secPlanarExamp}. The same applies to pure translations.
\end{remark}

\begin{remark}
Corollary \ref{corollary1} does not regard the accuracy of the integration
scheme. It rather says that the constraints are exactly satisfied regardless
of the accuracy of the integration scheme or step size.
\end{remark}

\begin{remark}[Parameterization of $SE\left( 3\right) $]
It was concluded that $SE\left( 3\right) $ shall be used as c-space. The
semidirect product requires according parameterization in terms of screw
coordinates $\mathbf{X}$ in (\ref{expX}). This is different to what is
currently used in MBS modeling. The main difference is that the translation
vector is not used as coordinate but is determined by the screw coordinates.
\end{remark}

\begin{remark}[Relevance for Lie group integration methods]
The ODE system (\ODEvec) arose from replacing $\dot{C}=CV$ by $\mathbf{dexp}%
_{-\mathbf{X}}\dot{\mathbf{X}}$, in order to analyze the standard vector
space integration schemes used in MBS dynamics. Resorting to the original
equation leads to the ODE
\end{remark}

\begin{align}
\dot{\mathbf{V}}& =F\left( \mathbf{X},\mathbf{V}\right) %
\addtocounter{equation}{1}  \tag{{\theequation}a} \\
\dot{C}& =CV.  \tag{{\theequation}b}
\end{align}%
This is an ODE on the state space Lie group $G\times \mathbb{R}^{6}$. Hence
Lie group integration schemes like the MK scheme \cite%
{muntekaas1998,muntekaas1999}, or the schemes adopting the Newmark and
generalized $\alpha $ scheme to the Lie group setting of constrained rigid
body dynamics \cite{BruelsCardona2010,BruelsCardonaArnold2012} can be
applied. Since these schemes use (local) canonical coordinates (of first
kind) to express the configuration update via the exponential mapping, the
above corollary also apply to the Lie group integration methods. In
particular, MK schemes solve the substitute equation (\ref{ODEvec1b}).

\begin{table}[h] \centering%

\begin{tabular}[b]{clll}
\hline
$n$ & \textbf{Subgroup} & \textbf{Kinematic Meaning} & \textbf{Lower
Kinematic Pair?} \\ \hline
1 & $\mathbb{R}$ & 1-dim. translation along some axis & Yes (Prismatic Joint)
\\ 
1 & $SO\left( 2\right) $ & 1-dim. rotation about arbitrary fixed axis & Yes
(Revolute Joint) \\ 
1 & $H_{p}$ & screw motion about arbitrary axis with finite pitch & Yes
(Screw Joint) \\ 
2 & $\mathbb{R}^{2}$ & 2-dim. planar translation & No \\ 
2 & $SO\left( 2\right) \ltimes \mathbb{R}$ & translation along arbitrary
axis \& rotation along this axis & Yes (Cylindrical Joint) \\ 
3 & $\mathbb{R}^{3}$ & spatial translations & No \\ 
3 & $SO\left( 3\right) $ & spatial rotations about arbitrary fixed point & 
Yes (Spherical Joint) \\ 
3 & $H_{p}\ltimes \mathbb{R}^{2}$ & translation in a plane + screw motion $%
\bot $ to this plane (pitch $p$) & No \\ 
3 & $SO\left( 2\right) \ltimes \mathbb{R}^{2}=SE\left( 2\right) $ & planar
motions & Yes (Planar Joint) \\ 
4 & $SO\left( 2\right) \ltimes \mathbb{R}^{3}=SE\left( 2\right) \ltimes 
\mathbb{R}$ & planar motions + spatial translations (Sch\"{o}nflies motion)
& No \\ 
6 & $SE\left( 3\right) $ & spatial motion & No \\ \hline
\end{tabular}
\caption{$n$-dimensional subgroups of $SE(3)$. It is indicated whether the subgroup
corresponds to a lower kinematic pair.}\label{TableSE3subgroups}%
\end{table}%

\begin{table}[h] \centering%
\begin{tabular}{clll}
\hline
$n$ & \textbf{Subgroup} & \textbf{Kinematic Meaning} & \textbf{Typical
Element} \\ \hline
1 & $\mathbb{R}$ & 1-dim. translation & $C=\left( \mathbf{I},\mathbf{r}%
\right) ,\mathbf{r}\in \mathbb{R}$ \\ 
1 & $SO_{0}%
\hspace{-0.6ex}%
\left( 2\right) $ & 1-dim. rotation about RFR origin & $C=\left( \mathbf{R},%
\mathbf{0}\right) ,\mathbf{R}\in SO\left( 2\right) $ \\ 
2 & $\mathbb{R}^{2}$ & 2-dim. planar translation & $C=\left( \mathbf{I},%
\mathbf{r}\right) ,\mathbf{r}\in \mathbb{R}^{2}$ \\ 
2 & $SO_{0}%
\hspace{-0.6ex}%
\left( 2\right) \times \mathbb{R}$ & 1-dim. rotation about RFR origin and
decoupled 1-dim. translation & $C=\left( \mathbf{R},\mathbf{r}\right) ,%
\mathbf{R}\in SO\left( 2\right) ,\mathbf{r}\in \mathbb{R}$ \\ 
3 & $\mathbb{R}^{3}$ & 3-dim. translation & $C=\left( \mathbf{I},\mathbf{r}%
\right) ,\mathbf{r}\in \mathbb{R}^{3}$ \\ 
3 & $SO_{0}%
\hspace{-0.6ex}%
\left( 3\right) $ & spatial rotation about RFR origin & $C=\left( \mathbf{R},%
\mathbf{0}\right) ,\mathbf{R}\in SO\left( 3\right) $ \\ 
3 & $SO_{0}%
\hspace{-0.6ex}%
\left( 2\right) \times \mathbb{R}^{2}$ & 1-dim. rotation about RFR origin
and decoupled 2-dim. translation & $C=\left( \mathbf{R},\mathbf{r}\right) ,%
\mathbf{R}\in SO\left( 2\right) ,\mathbf{r}\in \mathbb{R}^{2}$ \\ 
4 & $SO_{0}%
\hspace{-0.6ex}%
\left( 2\right) \times \mathbb{R}^{3}$ & 1-dim. rotation about RFR origin
and decoupled 3-dim. translation & $C=\left( \mathbf{R},\mathbf{r}\right) ,%
\mathbf{R}\in SO\left( 2\right) ,\mathbf{r}\in \mathbb{R}^{3}$ \\ 
4 & $SO_{0}%
\hspace{-0.6ex}%
\left( 3\right) \times \mathbb{R}$ & spatial rotation about RFR origin and
decoupled 1-dim. translation & $C=\left( \mathbf{R},\mathbf{r}\right) ,%
\mathbf{R}\in SO\left( 3\right) ,\mathbf{r}\in \mathbb{R}$ \\ 
5 & $SO_{0}%
\hspace{-0.6ex}%
\left( 3\right) \times \mathbb{R}^{2}$ & spatial rotation about RFR origin
and decoupled 2-dim. translation & $C=\left( \mathbf{R},\mathbf{r}\right) ,%
\mathbf{R}\in SO\left( 3\right) ,\mathbf{r}\in \mathbb{R}^{2}$ \\ 
6 & $SO_{0}%
\hspace{-0.6ex}%
\left( 3\right) \times \mathbb{R}^{3}$ & spatial rotation about RFR origin
and decoupled 3-dim. translation & $C=\left( \mathbf{R},\mathbf{r}\right) ,%
\mathbf{R}\in SO\left( 3\right) ,\mathbf{r}\in \mathbb{R}^{3}$ \\ \hline
\end{tabular}%
\caption{$n$-dimensinal subgroups of $SO(3)\times {\Bbb R}^3$. The subscript '0' in
$SO_0\hspace{-0.4ex}(3)$ indicates that the rotations are about the RFR
origin.}\label{TableSO3xR3subgroups}%
\end{table}%

\section{Examples}

A few examples are presented that illustrate the above discussion. In all
examples the rigid body is a rectangular aluminum solid with side lengths $%
0.3\times 0.15\times 0.05$\ m. The body-fixed RFR is located at the COM, as
shown in figure \ref{fig1}a). Its mass is $m=6.075$~kg, and its inertia
tensor w.r.t. the COM is $\mathbf{\Theta }_{0}=\mathrm{diag}%
~(0.0126563,0.0468281,0.0569531)\,$kg\thinspace m$^{2}$. In the following
results are presented for some lower pair joints and a higher kinematic
pair. The respective joint connects the body to the IFR at the ground. The
joint is located on the body at the point $\mathbf{p}=\left(
-0.15,0.0375,0\right) $m expressed in the body-fixed RFR.

The EOM (\ref{ODEvec1a},\ref{ODEvec1b}) are evaluated by solving (\ref%
{index1}). The Newton-Euler equations therein depend on the c-space. These
are (\ref{NESO3xR3}) for the direct product (using mixed velocities), and (%
\ref{NESE3}) for the semidirect product c-space (using body-fixed twists).
The constraints Jacobian $\mathbf{J}$ is joint specific.

The EOM (\ref{ODEvec1a},\ref{ODEvec1b}) were integrated for 10 s with an
explicit 4th order Runge-Kutta (RK4) method using the three step sizes $%
\Delta t=10^{-2}$s$,10^{-3}$s$,10^{-4}$s. For all examples the initial
configuration was $C_{0}=(\mathbf{I},-\mathbf{p})$, as in figure \ref{fig1}%
a).

\subsection{Spherical Joint --Heavy Top}

Connecting the body to the ground with a spherical joint (ball-and-socket
joint) yields the model of a heavy top. The heavy top became the standard
example for Lie group modeling and integration methods \cite%
{Bobenko,BruelsCardona2010,BruelsCardonaArnold2012,CelledoniOwren1999,EngoMarthinsen1998,EngoMarthinsen2001,Hairer2006}%
.

The spherical joint imposes the system of three position constraints%
\begin{equation}
h\left( C\right) :=\mathbf{r}+\mathbf{Rp}=\mathbf{0.}  \label{geomContsTop}
\end{equation}%
Differentiating twice w.r.t. time yields the acceleration constraints in
terms of body-fixed twists $\mathbf{V}^{\text{b}}=(\mathbf{\omega }^{\text{b}%
},\mathbf{v}^{\text{b}})$ 
\begin{equation}
\mathbf{J}\dot{\mathbf{V}}^{\text{b}}=\mathbf{R(}\widehat{\mathbf{\omega }}%
\widehat{\mathbf{\omega }}\mathbf{p}+\widehat{\mathbf{\omega }}\mathbf{v}),%
\text{ with }\mathbf{J=}\left( 
\begin{array}{cc}
\mathbf{R}\widehat{\mathbf{p}} & \ \ -\mathbf{R}%
\end{array}%
\right)  \label{accConstrTopSE3}
\end{equation}%
used in the $SE\left( 3\right) $ model, and in terms of mixed velocities $%
\mathbf{V}^{\text{m}}=(\mathbf{\omega }^{\text{b}},\dot{\mathbf{r}})$ 
\begin{equation}
\mathbf{J}\dot{\mathbf{V}}^{\text{m}}=\mathbf{R}\widehat{\mathbf{\omega }}%
\widehat{\mathbf{\omega }}\mathbf{p},\text{ with }\mathbf{J=}\left( 
\begin{array}{cc}
\mathbf{R}\widehat{\mathbf{p}} & \ \ -\mathbf{I}%
\end{array}%
\right)  \label{accConstrTopSO3xR3}
\end{equation}%
used in the $SO\left( 3\right) \times \mathbb{R}^{3}$ model. The EOM were
integrated with initial velocity $\mathbf{\omega }_{0}^{\text{b}}=(2\pi ,\pi
,0.2\pi )$rad/s.

The geometric constraints define the 3-dimensional constraint manifold $%
H=h^{-1}\left( \mathbf{0}\right) $. This is the manifold of spherical
displacements around the IFR origin. In $SE\left( 3\right) $ this is the
subgroup $H=SO\left( 3\right) \subset SE\left( 3\right) $. But $H$ is not a
subgroup of $SO\left( 3\right) \times \mathbb{R}^{3}$. It can only be a
subgroup of the latter, if the body-fixed RFR is located at the rotation
center. According to corollary \ref{corollary1} the $SE\left( 3\right) $
update should lead to exact constraint satisfaction while $SO\left( 3\right)
\times \mathbb{R}^{3}$ could not. This is confirmed by the position error $%
\varepsilon :=\left\Vert h\right\Vert $ shown in figure \ref%
{figErrorSpherical}. The constraint satisfaction of the direct product
formulation is dictated by the 4th order accuracy of the integration scheme. 
\begin{figure}[t]
\vspace{-1ex} 
\centerline{a)~\includegraphics[width=8.5cm]{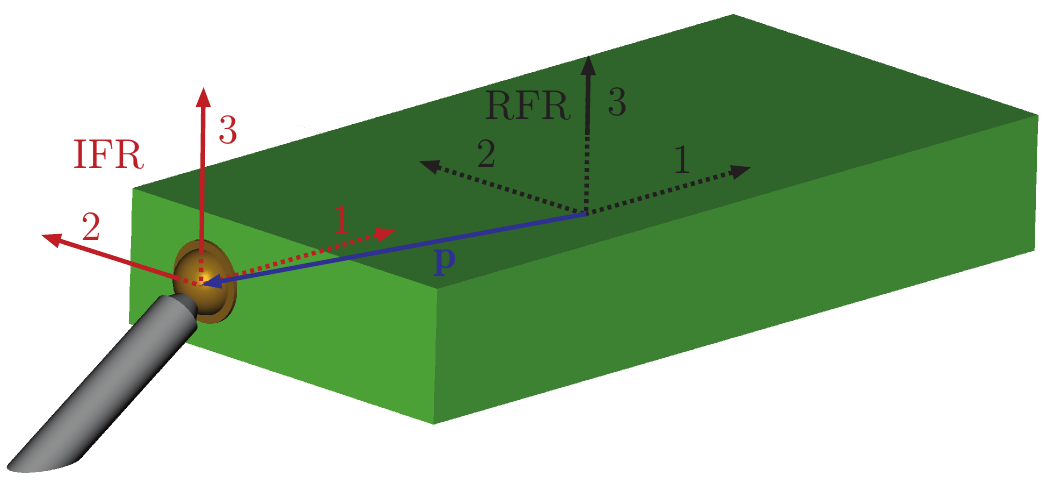}~~~~~
b)~\includegraphics[width=4cm]{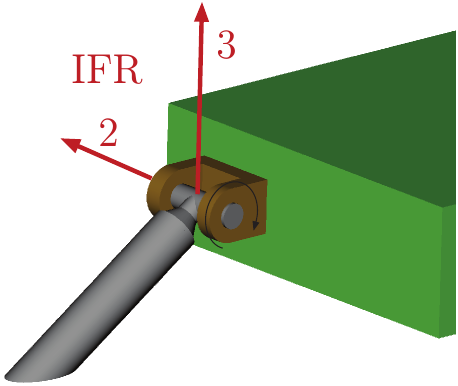}
} \vspace{-1.5ex}
\caption{Assigment of IFR and RFR, and kinematics of the a) spehrical joint,
and b) revolute joint.}
\label{fig1}
\end{figure}
\begin{figure}[h]
\vspace{-1ex} 
\centerline{a)~\includegraphics[width=8.5cm]{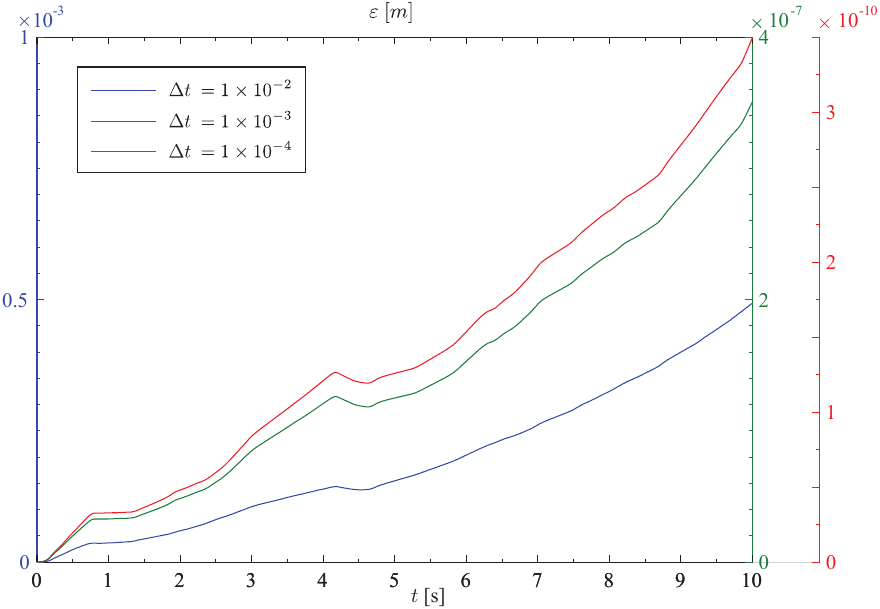}~~~~
b)~\includegraphics[width=8.3cm]{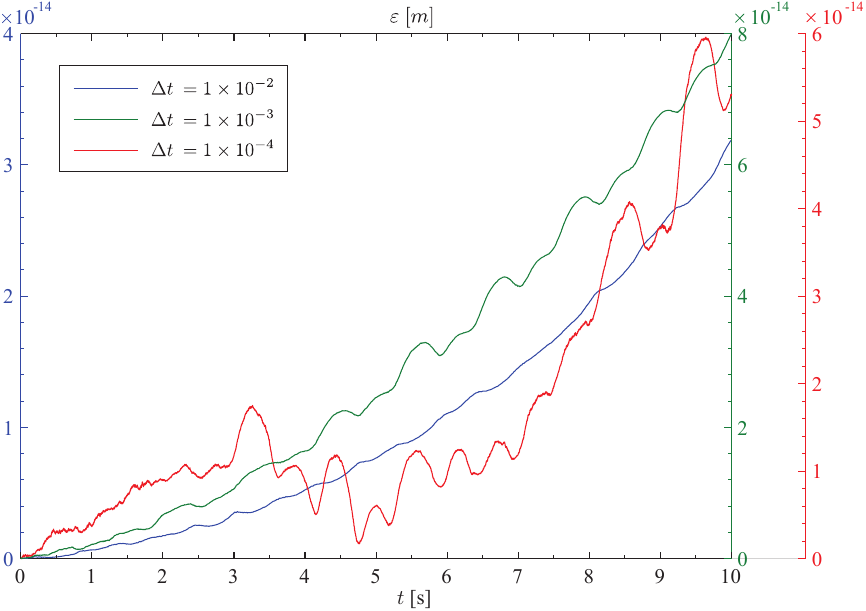}
} \vspace{-1.5ex}
\caption{Error in the spherical joint position constraints for a) $%
SO(3)\times {\mathbb{R}^{3}}$, and b) $SE(3)$.}
\label{figErrorSpherical}
\end{figure}

\subsection{Revolute Joint}

With the orientation of RFR shown in figure \ref{fig1}b), the joint axis is
aligned with the 2-axis, and the revolute joint imposes the five geometric
constraints%
\begin{eqnarray}
\mathbf{r}+\mathbf{Rp} &=&\mathbf{0}  \label{RevConstr1} \\
R_{12} &=&0  \label{RevConstr2} \\
R_{23} &=&0  \label{RevConstr3}
\end{eqnarray}%
where $\mathbf{R}=(R_{ij})$. The system of acceleration constraints in terms
of body-fixed twists consists of (\ref{accConstrTopSE3}) and the two
constraints 
\begin{eqnarray}
\ddot{\omega}_{1}^{\text{b}} &=&0  \label{RevConstr4} \\
\ddot{\omega}_{3}^{\text{b}} &=&0  \label{RevConstr5}
\end{eqnarray}%
where $\mathbf{\omega }^{\text{b}}=(\omega _{1}^{\text{b}},\omega _{2}^{%
\text{b}},\omega _{3}^{\text{b}})$. In terms of the mixed velocities the
acceleration constraints consists of (\ref{accConstrTopSO3xR3}) and the two
constraints (\ref{RevConstr4}),(\ref{RevConstr5}). The EOM were integrated
with initial velocity $\mathbf{\omega }_{0}^{\text{b}}=(0,2\pi ,0)$ rad/s.

The revolute joint constraints define the subgroup $H=SO\left( 2\right)
\subset SE\left( 3\right) $ of rotations about a fixed axis. Since the axis
does not pass through the origin of the RFR it is not a $SO\left( 3\right)
\times \mathbb{R}^{3}$ subgroup. The consequences are apparent in figure \ref%
{figErrorRevJoint}. Both exactly satisfy the rotation constraints, since a
solution of (\ref{index1}) only has non-zero components for rotations about
the rotation axis. 
\begin{figure}[h]
\vspace{-1ex} 
\centerline{a)~\includegraphics[width=8.3cm]{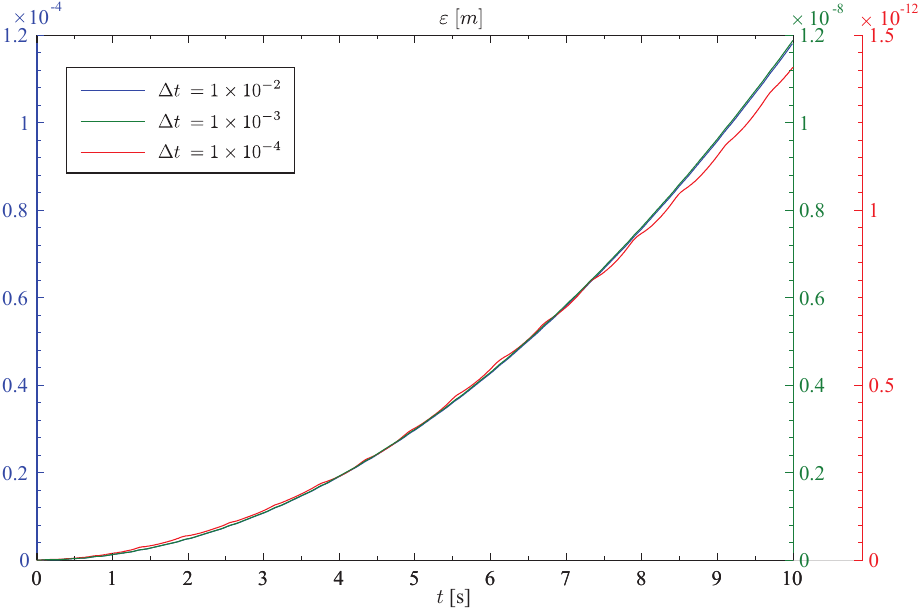}~~~~
b)~\includegraphics[width=8.3cm]{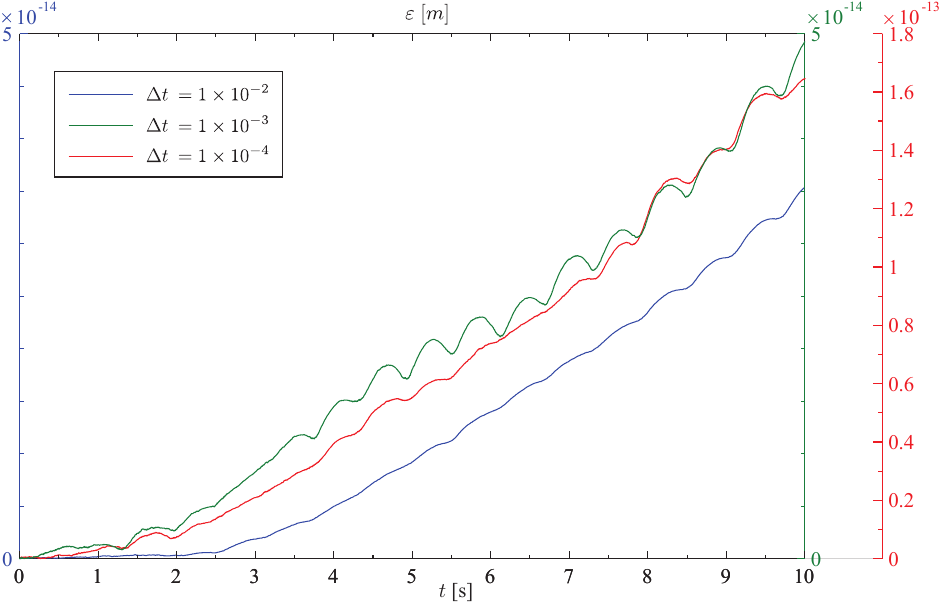}
}\vspace{-1.5ex}
\caption{Error in the revolute joint position constraints for a) $%
SO(3)\times {\mathbb{R}^{3}}$, and b) $SE(3)$.}
\label{figErrorRevJoint}
\end{figure}

\subsection{Cylindrical Joint}

A cylindrical joint allows for rotation about and translation along a fixed
axis. The RFR is allocated as for the revolute joint in figure \ref{fig1}a).
The according system of four geometric constraints consists of the first and
third equation of (\ref{RevConstr1}) and the two equations (\ref{RevConstr2}%
),(\ref{RevConstr3}). The system of acceleration constraints in terms of
body-fixed twists consists of the first and third equation of (\ref%
{accConstrTopSE3}) and the two constraints (\ref{RevConstr4}),(\ref%
{RevConstr5}). Using mixed velocities these are the first and third equation
of (\ref{accConstrTopSO3xR3}) and (\ref{RevConstr4}),(\ref{RevConstr5}).

The EOM were integrated with initial velocity $\mathbf{\omega }_{0}^{\text{b}%
}=(0,2\pi ,0)$ rad/s and $\mathbf{v}_{0}^{\text{b}}=\dot{\mathbf{r}}=\mathbf{%
p}\times \mathbf{\omega }_{0}^{\text{b}}+(0,1,0)$ m/s. The numerical results
are the same as in figure \ref{figErrorRevJoint} for the revolute joint.
This is clear since the constraints define the direct product subgroup $%
SO\left( 2\right) \times \mathbb{R}$ of $SE\left( 3\right) $.

\begin{figure}[t]
\vspace{-1ex} 
\centerline{a)~\includegraphics[width=4.7cm]{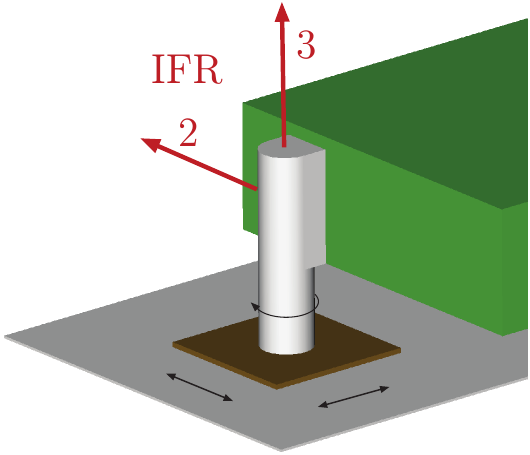}~~~~~
b)~\includegraphics[width=4cm]{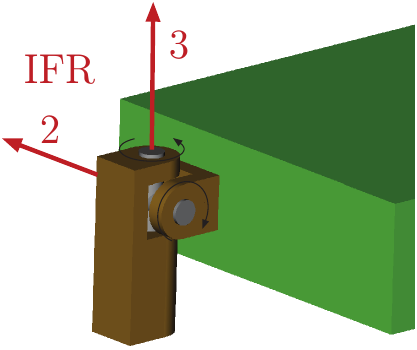}~~~~~
c)~\includegraphics[width=4.7cm]{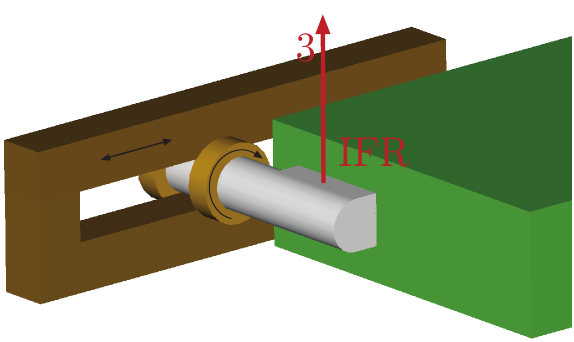}
} \vspace{-1.5ex}
\caption{Kinematics of the a) planar joint, b) hook joint, and c)
pin-in-slot joint.}
\label{fig2}
\end{figure}

\subsection{Planar Joint%
\label{secPlanarExamp}%
}

A 3 degree-of-freedom (DOF) 'planar joint' restricts the body to move on a
plane, i.e. it can perform translations in the plane and independent
rotations about an axis normal to the plane. Assigning the joint and RFR as
in figure \ref{fig2}a), the body can move in the 1-2-plane of the IFR. The
three geometric constraints are hence the third equation of (\ref{RevConstr1}%
) and the equations (\ref{RevConstr2},\ref{RevConstr3}). Accordingly, in
terms of body-fixed twists, the acceleration constraints are the third
equation of (\ref{accConstrTopSE3}) together with (\ref{RevConstr4},\ref%
{RevConstr5}). In terms of mixed velocities, these are the third equation in
(\ref{accConstrTopSO3xR3}) together with (\ref{RevConstr4},\ref{RevConstr5}).

The submanifold $H$, defined by the planar joint constraints, consists of
all configurations the body can attain when it can be freely located in the
plane. As explained in remark \ref{remarkPlanr} the update steps of both
formulations shall lead to exact constraint satisfaction. This is confirmed
by the simulation results. The constraints are satisfied regardless of the
integration step size (therefore not shown here).

\subsection{Hook Joint}

A hook joint allows for successive rotations about two (usually
perpendicular) rotation axes. With the joint assignment in figure \ref{fig2}%
b) the condition is that the 3-axis of the IFR and 2-axis of the body-fixed
RFR are perpendicular, and these two define the rotation axes. The
corresponding four geometric constraints are the three equations (\ref%
{RevConstr1}) and $R_{23}=0$. The acceleration constraints are omitted for
brevity. The EOM were integrated with initial velocity $\mathbf{\omega }%
_{0}^{\text{b}}=(0,\pi ,0.2\pi )$ rad/s.

The hook joint defines a 2-dimensional submanifold $H$ of the subgroup of
rotations about the IFR origin. There is no 2-dimensional subgroup of $%
SO\left( 3\right) $. Thus $H$ cannot be a subgroup of $SE\left( 3\right) $
nor of $SO\left( 3\right) \times \mathbb{R}^{3}$.

Figure \ref{figRotErrorHook} shows the expected dependence of the rotation
error $\varepsilon _{r}:=\left\vert R_{23}\right\vert $ on the step size.
This dependence is also observed for the position constraints when using the 
$SO\left( 3\right) \times \mathbb{R}^{3}$ formulation (figure \ref%
{figPosErrorHook}a)). The $SE\left( 3\right) $ update does, however, exactly
satisfy position constraint (figure \ref{figPosErrorHook}b)). This can be
explained by noting that points on the body are constrained to move on
spheres centered at the intersection of the joint axes. In $SE\left(
3\right) $ the completion group of the constrained rigid body velocities,
i.e. the smallest subgroup containing $H$, is $K=SO\left( 3\right) $. The
orbit $O_{x}=\{Cx|C\in K\}$ of a point $x\in E^{3}$ on the body by $K$ is a
sphere centered at the joint center. Hence, the position constraints are
respected.

\begin{figure}[h]
\vspace{-1ex} 
\centerline{a)~\includegraphics[width=8.3cm]{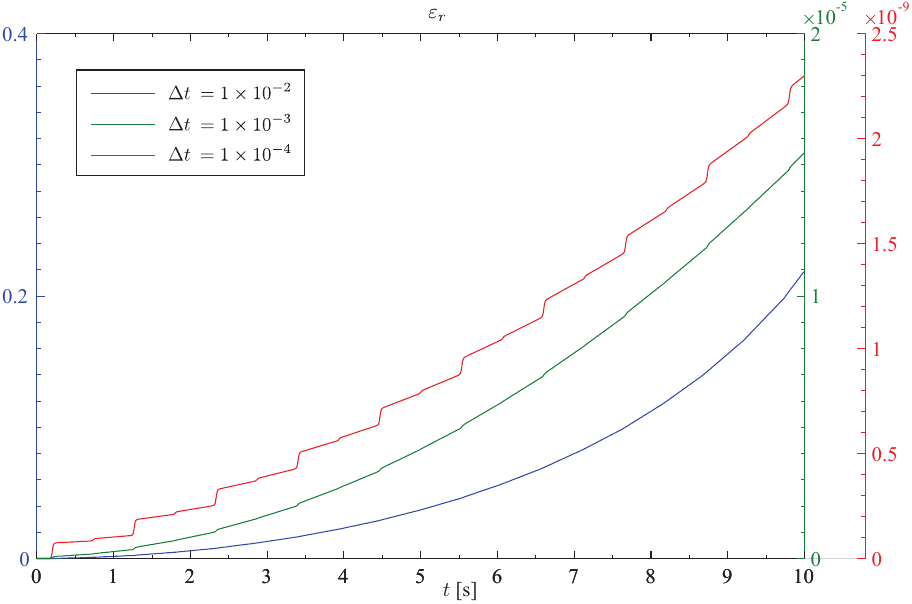}~~~~
b)~\includegraphics[width=8.3cm]{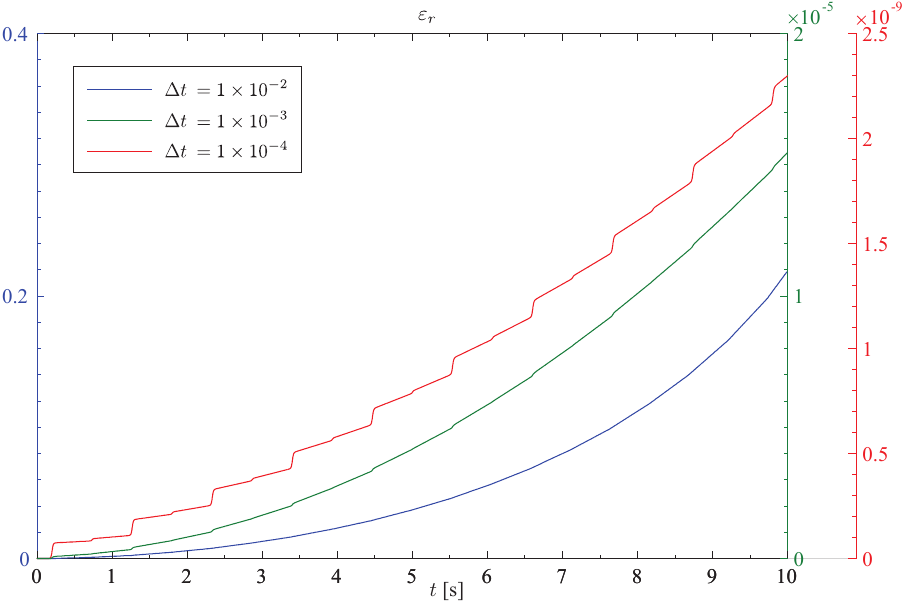}
} \vspace{-1.5ex}
\caption{Error in the hook joint rotation constraints for a) $SO(3)\times {%
\mathbb{R}^{3}}$, and b) $SE(3)$.}
\label{figRotErrorHook}
\end{figure}
\begin{figure}[h]
\vspace{-1ex} 
\centerline{a)~\includegraphics[width=8.5cm]{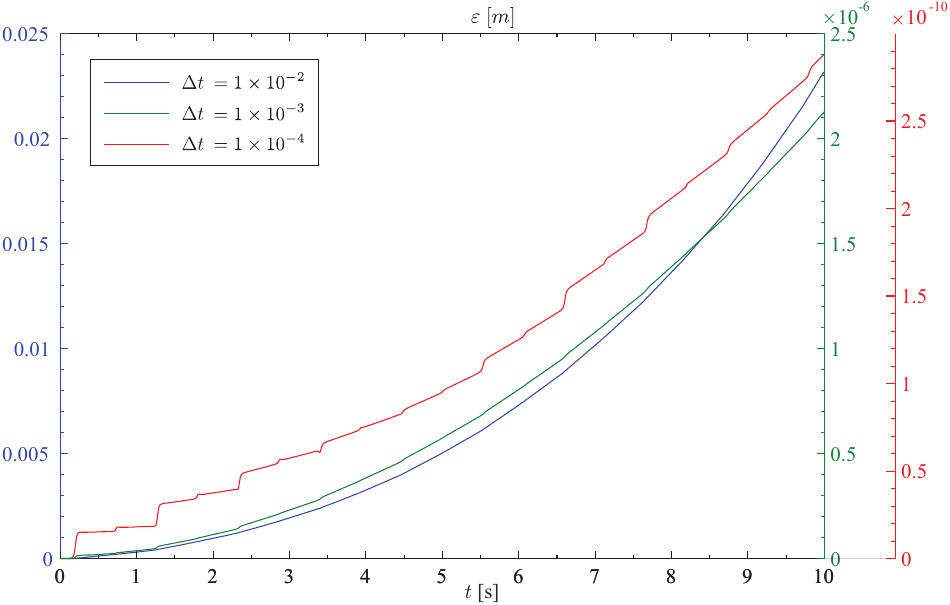}~~~~
b)~\includegraphics[width=8.3cm]{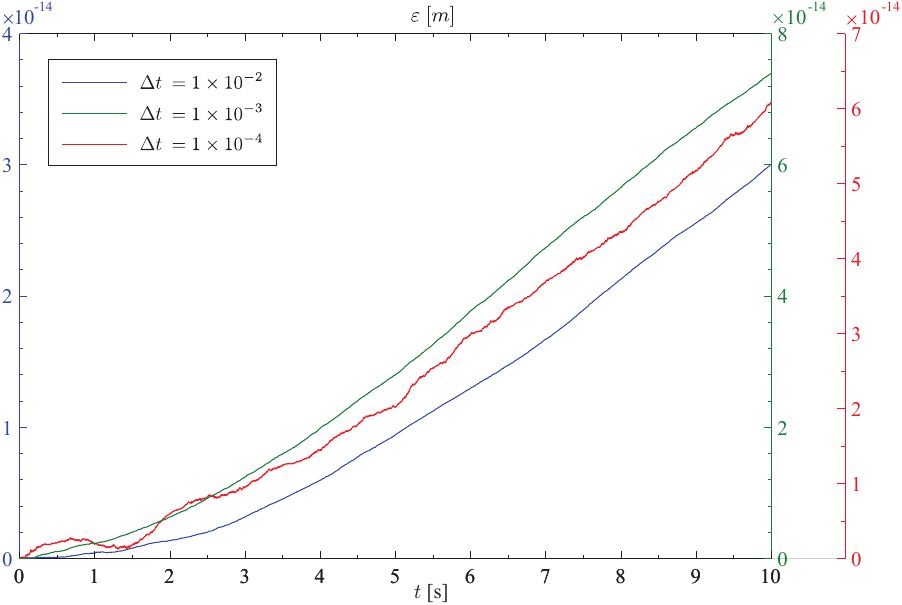}
}\vspace{-1.5ex} \vspace{4ex}
\caption{Error in the hook joint position constraints for a) $SO(3)\times {%
\mathbb{R}^{3}}$, and b) $SE(3)$.}
\label{figPosErrorHook}
\end{figure}

\subsection{Pin-in-Slot Joint}

The pin-in-slot joint is a 2 DOF higher kinematic pair. According to the RFR
in figure \ref{fig2}c) the four geometric constraints are the second and
third equation of (\ref{RevConstr1}) together with the equations (\ref%
{RevConstr2},\ref{RevConstr3}). The acceleration constraints for the
body-fixed twist formulation are the second and third equation of (\ref%
{accConstrTopSE3}) and the constraints (\ref{RevConstr4},\ref{RevConstr5}).
In mixed velocity formulation, the acceleration constraints are the second
and third equation of (\ref{accConstrTopSO3xR3}) together with the two
equations (\ref{RevConstr4},\ref{RevConstr5}).

The joint does not define a motion subgroup, so that the $SE\left( 3\right) $
update cannot respect the constraints. This also holds for $SO\left(
3\right) \times \mathbb{R}^{3}$ if a general RFR is used. This is confirmed
in figure \ref{figErrorPinInSlot} where numerical results are shown for the
initial velocities $\mathbf{\omega }_{0}^{\text{b}}=(0,\pi ,0.2\pi )$ rad/s
and $\mathbf{v}_{0}^{\text{b}}=\mathbf{p}\times \mathbf{\omega }_{0}^{\text{b%
}}+(1,0,0)~$m/s.

In order to clarify the significance of RFR for the $SO\left( 3\right)
\times \mathbb{R}^{3}$ formulation, consider the case where the translation
axis defined by the joint passes through the RFR at the COM, i.e. $\mathbf{p}%
=\mathbf{0}$. Then the motions in fact define the subgroup $SO_{0}%
\hspace{-0.6ex}%
\left( 2\right) \times \mathbb{R}$ of $SO\left( 3\right) \times \mathbb{R}%
^{3}$, and the constraints are exactly satisfied. The trivial numerical
results are omitted.

\begin{figure}[h]
\vspace{-1ex} 
\centerline{a)~\includegraphics[width=8.5cm]{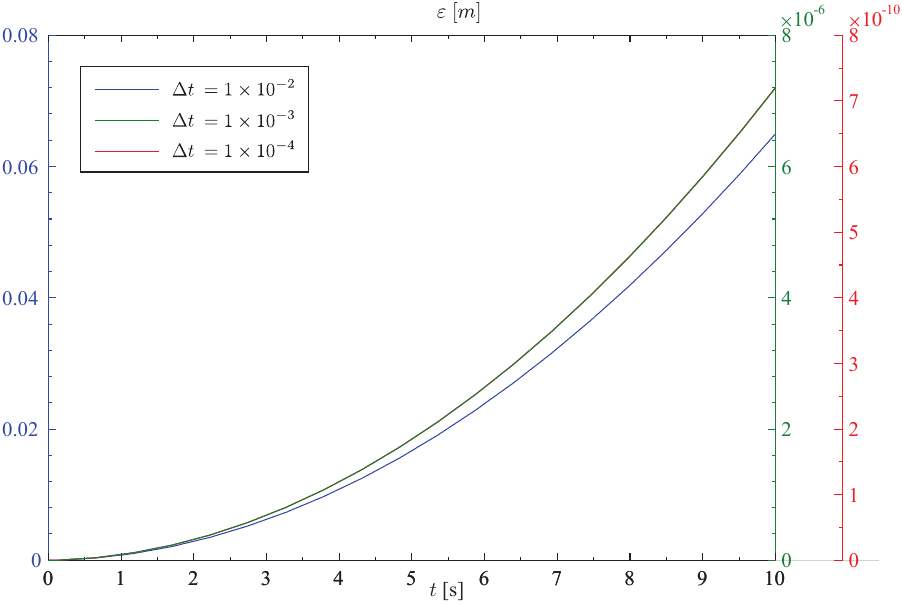}~~~~
b)~\includegraphics[width=8.3cm]{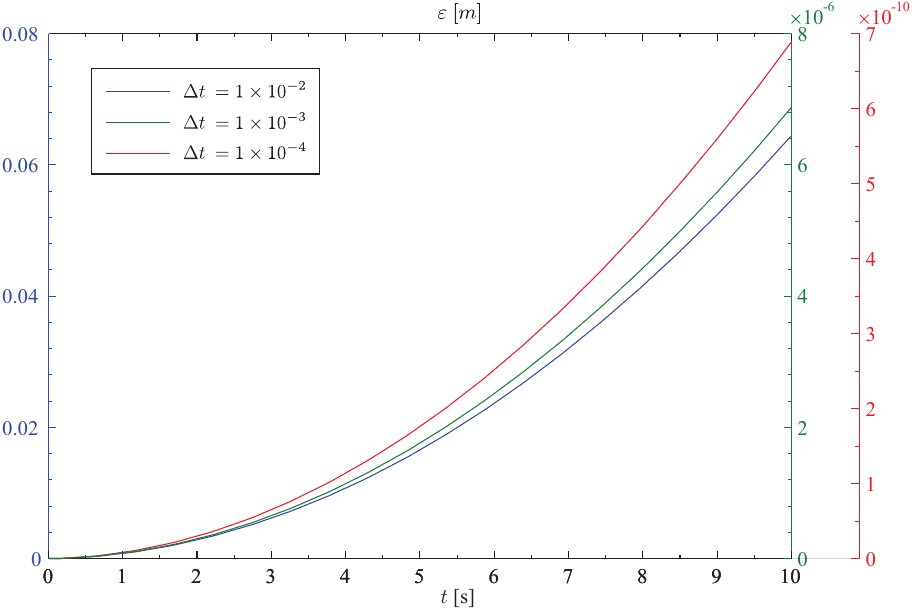}
} \vspace{-1.5ex}
\caption{Error in the pin-in-slot joint position constraints for a) $%
SO(3)\times {\mathbb{R}^{3}}$, and b) $SE(3)$.}
\label{figErrorPinInSlot}
\end{figure}

\section{Conclusion}

Constrained MBS comprising rigid bodies are frequently modeled by a system
of Newton-Euler equations subjected to the constraints due to geometric
interconnections. This is commonly treated as index 1 DAE system, and
numerically solved using ODE integration schemes. The classical MBS
formulation uses decoupled rotation and translation parameters. The
underlying geometric model is that of $SO\left( 3\right) \times \mathbb{R}%
^{3}$, although rigid body motions form the Lie group $SE\left( 3\right) $.
A major problem when integrating the index 1 formulation is the violation of
constraints. It has been observed, however, that the c-space Lie group
significantly affects the constraint satisfaction. This problem was here
addressed for the single rigid body constrained to an inertial system. For
the practically important case of lower pair joints, it is concluded that
the constraints are exactly satisfied (independently of the integration
scheme and accuracy) if $SE\left( 3\right) $ is used as c-space Lie group.
The practical implication of using $SE\left( 3\right) $ as c-space is the
use of screw coordinates $\mathbf{X}=\left( \mathbf{\xi },\mathbf{\eta }%
\right) $ as parameters, and coefficient matrices of the form (\ref%
{dexpInvSE3Park}) in the kinematic equations (\ref{ODEvec1b}) of the model.
Although already thoroughly investigated and published \cite%
{BorriBottasso1994,Bottasso1998,Bottasso2002,Liu1988}, the use of screw
coordinates for MBS modeling has not yet found due attention.

The presented result is valid for the single rigid body constrained to an
inertial frame. It cannot be immediately carried over to MBS. This shall be
the topic of further research.

\begin{acknowledgements}
The author acknowledges that this work has been partially supported by the Austrian COMET-K2 program of the Linz Center of Mechatronics (LCM).
\end{acknowledgements}

\end{document}